\documentclass{article}





    
\usepackage{fullpage}
\usepackage[T1]{fontenc}    
\usepackage{hyperref}       
\usepackage{url}            
\usepackage{booktabs}       
\usepackage{amsfonts}       
\usepackage{nicefrac}       
\usepackage{microtype}      

\title{DNF-Net: A Neural Architecture for
Tabular Data}
\author{Ami Abutbul \\ amramabutbul@cs.technion.ac.il
   \and Gal Elidan \\ elidan@google.com
   \and Liran Katzir \\ lirank@google.com
   \and Ran El-Yaniv \\ rani@cs.technion.ac.il}

\usepackage{bbm}
\usepackage[numbers]{natbib}
\usepackage{amsmath}
\usepackage{xcolor}
\usepackage{graphicx}
\usepackage{amsthm}
\usepackage{amsfonts}
\usepackage{amssymb} 
\usepackage{wrapfig}
\usepackage[ruled,vlined]{algorithm2e}
\usepackage{booktabs}
\usepackage{placeins}
\usepackage{wrapfig}
\usepackage{subcaption}
\usepackage{multirow}
\usepackage{siunitx}
\usepackage{colortbl}
\usepackage{makecell}

\definecolor{yellow}{rgb}{0.976,0.839,0.41}
\definecolor{green}{rgb}{0.488,0.898,0.41}
\definecolor{blue}{rgb}{0.527,0.703,0.898}
\newcolumntype{Y}{>{\columncolor{yellow}}c}
\newcolumntype{G}{>{\columncolor{green}}c}
\newcolumntype{B}{>{\columncolor{blue}}c}



\newcommand{\eqdef}{\triangleq}

\newcommand{\paren}[1]{\kern-0.5ex\left( #1 \right)}

\newcommand{\sqparen}[1]{\left[ #1 \right]}

\newcommand{\abs}[1]{\left\lvert#1\right\rvert}
\newcommand{\sspm}[1]{\scriptsize ($\pm${#1})}

\newcommand{\sign}{\operatorname*{sign}}
\newcommand{\diag}{\operatorname*{diag}}
\newcommand{\loc}{\operatorname*{loc}}
\newcommand{\smloc}{\operatorname*{sm-loc}}
\newcommand{\Softmax}{\operatorname*{Softmax}}

\newcommand{\Ex}[2][]{
\ifx\hfuzz#1\hfuzz 
\mathbb{E}\sqparen{#2}
\else
\mathbb{E}_{#1}\sqparen{#2}
\fi
}



\newcommand{\reals}{\mathbb{R}}

\newcommand{\OR}{{\rm{OR}}}
\newcommand{\AND}{{\rm{AND}}}
\newcommand{\DNNF}{{\rm{DNNF}}}
\newcommand{\DNFNET}{{\rm{DNF\text{-}Net}}}
\newcommand{\DNFNETs}{{\rm{DNF\text{-}Nets }}}
\newcommand{\bx}{\mathbf{x}}
\newcommand{\bu}{\mathbf{u}}

\newcommand{\bb}{\mathbf{b}}
\newcommand{\bc}{\mathbf{c}}
\newcommand{\by}{\mathbf{y}}
\newcommand{\bm}{\mathbf{m}}
\newcommand{\bmu}{\boldsymbol\mu}
\newcommand{\bSigma}{\mathbf{\Sigma}}

\renewcommand{\bmu}{\mathbf{\mu}}

\newtheorem{lemma}{Lemma}
\newtheorem{theorem}{Theorem}
\newtheorem{definition}{Definition}

\begin{document}

\maketitle

\begin{abstract}

A challenging open question in deep learning is how to handle tabular data. Unlike domains such as image and natural language processing, where deep architectures prevail, there is still no widely accepted neural architecture that dominates tabular data. 
As a step toward bridging this gap, we present DNF-Net a novel generic architecture whose inductive bias elicits models whose structure corresponds to logical Boolean formulas in disjunctive normal form (DNF) over affine soft-threshold decision terms. In addition, DNF-Net promotes localized decisions that are taken over small subsets of the features. We present an extensive empirical study showing that DNF-Nets significantly and consistently outperform FCNs over tabular data.
With relatively few hyperparameters, DNF-Nets open the door to practical end-to-end handling of tabular data using neural networks.
We present ablation studies, which justify the design choices of DNF-Net including the three inductive bias elements, namely, Boolean formulation, locality, and feature selection. 
\end{abstract}

\section{Introduction}

A key point in successfully applying deep neural models is the construction of architecture families that contain inductive bias relevant to the application domain.
Architectures such as CNNs and RNNs have become the preeminent favorites for modeling images and sequential data, respectively. 
For example, the inductive bias of CNNs favors locality, as well as 
translation and scale invariances. With these properties, CNNs work extremely well
on image data, and are capable of generating problem-dependent representations that almost completely overcome the need for expert knowledge.
Similarly, the inductive bias promoted by RNNs and LSTMs (and more recent models such as transformers) favors both locality and temporal stationarity. 

When considering \emph{tabular data}, however, neural networks are not the hypothesis class of choice. Most often, the winning class in learning problems involving \emph{tabular data} is decision forests. In Kaggle competitions, for example, gradient boosting of decision trees (GBDTs) \citep{chen2016xgboost,friedman2001greedy, prokhorenkova2018catboost, ke2017lightgbm} are generally the superior model.
While it is quite practical to use GBDTs for medium size datasets, it is extremely hard to scale these methods to very large datasets (e.g., Google or Facebook scale).
The most significant computational disadvantage of GBDTs is the need to store (almost) the entire dataset in memory\footnote{This disadvantage is shared among popular GBDT implementations: XGBoost, LightGBM, and CatBoost.}. 
Moreover, handling multi-modal data, which involves both tabular and spatial data (e.g., images), is problematic. Thus, since GBDTs and neural networks cannot be organically optimized, such multi-modal tasks are left with sub-optimal solutions. Creating a purely neural model for tabular data, which can be trained with SGD end-to-end, is therefore a prime objective.


A few works have aimed at constructing neural models for tabular data (see Section~\ref{sec:related}). 
Currently, however, there is still no widely accepted end-to-end neural architecture that can handle
tabular data and consistently replace fully-connected architectures, or better yet, replace GBDTs. 
Here we present $\DNFNET$s, a family of neural network architectures whose primary inductive bias is an ensemble comprising a disjunctive normal form (DNF) formulas over linear separators. 
This family also promotes (input) feature selection and spatial localization of ensemble members.
These inductive biases have been included by design to promote 
conceptually similar elements that are inherent GBDTs and random forests.
Appealingly, the $\DNFNET$ architecture can be trained end-to-end using standard gradient-based optimization. Importantly, it consistently and significantly outperforms FCNs on tabular data, and can sometime even outperform GBDTs.



The choice of appropriate inductive bias for specialized hypothesis classes for tabular data is challenging since, clearly, there are many different kinds of such data. Nevertheless, 
the ``universality'' of forest methods in handling a wide variety of tabular data suggests that it might be beneficial to emulate, using neural networks, the important elements that are part of the tree ensemble representation and algorithms. Concretely, every decision tree is equivalent to some DNF formula over axis-aligned linear separators (see details in Section~\ref{sec:theoretical}). This makes DNFs an essential element in any such construction. Secondly, all contemporary forest ensemble methods rely heavily on feature selection. This feature selection is manifested both during the induction of each individual tree, where features are sequentially and greedily selected using information gain or other related heuristics, and by uniform sampling features for each ensemble member.
Finally, forest methods include an important localization element -- GBDTs with their sequential construction within a boosting approach, where each tree re-weights the instance domain differently -- and random forests with their reliance on bootstrap sampling.
$\DNFNET$s are designed to include precisely these three elements.

After introducing $\DNFNET$, we include a Vapnik-Chervonenkins (VC) comparative analysis of DNFs and trees showing that DNFs potentially have advantage over trees when the input dimension is large and vice versa. We then present an extensive empirical study. First, we present an ablation study over three real-life tabular data prediction tasks that convincingly demonstrates the importance of all three elements included in the $\DNFNET$ design. Second, we  analyze our novel feature selection component over controlled synthetic experiments, which indicate that this component is of independent interest.
Finally, we compare $\DNFNET$s to FCNs and GBDTs over several large classification tasks, including two past Kaggle competitions. Our results indicate that $\DNFNET$s consistently outperform FCNs, and can sometime even outperform GBDTs.



\section{Disjunctive Normal Form Networks (DNF-Nets)}
In this section we introduce the \DNFNET~architecture, which consists
of three elements. The main component is a block of layers emulating a DNF formula.
This block will be referred to as a \emph{Disjunctive Normal Neural Form} (DNNF). The second and third components, respectively, are a feature selection module, and a localization one. 
In the remainder of this section we describe each component in detail. Throughout our description we denote by $\bx \in \reals^d$ a column of input feature vectors, by $\bx_i$, its $i$th entry, and by $\sigma(\cdot)$ the sigmoid function.

\subsection{A Disjunctive Normal Neural Form (DNNF) Block}
\label{sec:DNF_structure}
A \emph{disjunctive normal neural form} (DNNF) block is assembled
using a two-hidden-layer network. The first layer creates
affine ``literals'' (features) and is trainable. The second layer implements a number of soft conjunctions over the literals, and the third output layer is a neural OR gate. Importantly, only the first layer is trainable, while the two other are binary and fixed.

We begin by describing the neural AND and OR gates. For an input vector $\bx$,
we define soft, differentiable versions of such gates as
\begin{eqnarray*}
\OR(\bx) &\eqdef& \tanh\paren{\sum_{i=1}^{d}\bx_i + d - 1.5 }, \hspace{50pt}
\AND(\bx) \eqdef \tanh\paren{\sum_{i=1}^{d}\bx_i - d + 1.5 }.
\end{eqnarray*}
These definitions are straightforwardly motivated by 
the precise neural implementation of the corresponding binary gates.
Notice that by replacing $\tanh$ by a binary activation and changing the bias constant 
from 1.5 to 1, we obtain an exact implementation of the corresponding logical gates
for binary input vectors \citep{anthony2005connections,shalev2014understanding};
see a proof of this statement in Appendix~\ref{sec:orand}.
Notably, each unit does not have any trainable parameters.
We now define the AND gate in a vector form to project the logical operation over a subset of variables. 
The projection is controlled by an indicator column vector (a mask)
$\bu \in \{0,1\}^d$. With respect to such a projection vector $\bu$, we define the corresponding \emph{projected} gate as $\AND_{\bu}(\bx) \eqdef \tanh\paren{\bu^T\bx - ||\bu||_1 + 1.5 }.$

Equipped with these definitions, a $\DNNF(\bx): \reals^d \rightarrow \reals$ with $k$ conjunctions over $m$ literals is,
\begin{align}
    L(\bx) & \eqdef \tanh\paren{\bx^TW + \bb} \in \reals^m \label{eq:literals} \\ 
    \DNNF(\bx) & \eqdef \OR\paren{[\AND_{\bc^1}(L(\bx)), 
    \AND_{\bc^2}(L(\bx)), \dots , \AND_{\bc^k}(L(\bx))]}  \label{eq:first_layer} .
\end{align}
Equation~(\ref{eq:literals}) defines $L(\bx)$ that generates 
$m$ ``neural literals'', each of which is the result of a $\tanh$-activation of a (trainable) affine transformation. 
The (trainable) matrix $W \in \reals^{d \times m}$, as well as the row vector bias term $\bb \in \reals^{m}$,
determine the affine transformations for each literal such that each of its columns corresponds to one literal.
Equation~(\ref{eq:first_layer}) defines a DNNF. In this equation, the vectors $\bc^i \in \{0,1\}^m$, $1 \leq i \leq k$, are binary indicators such that $\bc^i_j = 1$ iff the $j$th literal belongs to the $i$th conjunction. In our design, each literal belongs to a single conjunction. These indicator vectors are  defined and fixed according to the number and length of the conjunctions (See Appendix \ref{appendix:Training details}).

\subsection{DNF-Nets}

The embedding layer of a $\DNFNET$ with $n$ DNNF blocks is a simple concatenation
\begin{equation}
\label{eq:embedding}
E(\bx) \eqdef [\DNNF_1(\bx), \DNNF_2(\bx), \ldots, \DNNF_n(\bx)].
\end{equation}
Depending on the application, the final 
$\DNFNET$ is a composition of an output layer over $E(\bx)$. For example, for binary classification (logistic output layer),
$\DNFNET(x): \reals^d \rightarrow (0,1)$ has the following form, 
\begin{equation}
\label{eq:DNFNET}
    \DNFNET(\bx) \eqdef \sigma\left(\sum_{i=1}^n w_i \DNNF_i(\bx) + b_i\right) .
\end{equation}
To summarize, a $\DNFNET$ is always a four-layer network (including the output layer),
and only the first and last layers are learned.
Each DNNF block has two parameters: the number of conjunctions $k$ and the length $m$ of these conjunctions, allowing for a variety of $\DNFNET$ architectures.
In all our experiments we considered a single $\DNFNET$ architecture that has a
fixed diversity of DNNF blocks which includes a number of different DNNF groups with different $k$, each of which has a number of conjunction sizes $m$ (see details in Appendix~\ref{appendix:Training details}).
The number $n$ of DNNFs was treated as a hyperparameter, and selected based on a validation set as described on Appendix \ref{appendix:data partition and grid search process appendix}.



\subsection{Feature Selection}
\label{sec:feature_selection}
One key strategy in decision tree training is greedy feature selection,
which is performed hierarchically at any split, and allows decision trees to exclude irrelevant features. Additionally, 
decision tree ensemble algorithms apply random sampling to select a subset of the features, which is used to promote diversity, and prevent different trees focusing on the same set of dominant features in their greedy selection.
In line with these strategies, we include in our $\DNFNET$s 
conceptually similar feature selection elements:
(1) a subset of features uniformly and randomly sampled for each \DNNF;
(2) a trainable mechanism for feature selection, applied on the resulting random subset. These two elements are combined and implemented in the affine literal generation layer described in Equation~(\ref{eq:literals}), and applied independently for each \DNNF.
We now describe these techniques in detail.

Recalling that $d$ is the input dimension, the random selection is made by generating a stochastic binary mask, $\bm_s \in \{0, 1\}^d$, such that the probability of any entry being 1 is $p$ (see Appendix \ref{appendix:Training details} for details on setting this parameter). For a given mask $\bm_s$, this selection can be applied over affine literals using 
a simple product $\diag(\bm_s)W$, where $W$ is the matrix of Equation~(\ref{eq:literals}). 
We then construct a \emph{trainable} mask $\bm_t \in \reals^d$, which will be applied on the features that are kept by $\bm_s$. 
We introduce a novel trainable feature selection component that combines binary quantization of the mask together with modified elastic-net regularization.
To train a binarized 
vector we resort to the straight-through estimator \cite{HintonLectures,hubara2017quantized}, which can be used effectively to train 
non-differentiable step functions such as a threshold or sign. The trick is to compute  
the step function \emph{exactly} in the forward pass, and utilize a differentiable proxy in the  backward pass.
We use a version of the straight-through estimator for the sign function \cite{bengio2013estimating}, 
$$
        \Phi(x) \eqdef \begin{cases}
                \sign(x), & \text{forward pass};\\
                \tanh(x), & \text{backward pass} .\\
                \end{cases}
$$
Using the estimator $\Phi(x)$, we define a differentiable binary threshold 
function $T(x) = \frac{1}{2}\Phi(|x| - \epsilon) + \frac{1}{2}$, where $\epsilon \in \reals$ defines an epsilon neighborhood around zero for which the output of $T(x)$ is zero, and one outside of this neighborhood (in all our experiments, we set $\epsilon=1$ and initialize the entries of $\bm_t$ above this threshold). We then apply this selection by $\diag(T(\bm_t))W$. 


Given a fixed stochastic selection $\bm_s$, to train the binarized selection $\bm_t$
we employ regularization. Specifically, we consider a modified version of the elastic net regularization, $R(\bm_t,\bm_s)$, which is tailored to our task. The modifications are reflected in two parts. First, the balancing between the $L_1$ and $L_2$ regularization is controlled by a trainable parameter $\alpha \in \reals$. Second, the expressions of the $L_1$ and $L_2$ regularization are replaced by $R_1(\bm_t,\bm_s), R_2(\bm_t,\bm_s)$, respectively (defined below). Moreover, since we want to take into account only features that were selected by the random component, the regularization is applied on the vector $\bm_{ts} = \bm_t \odot \bm_s$, where $\odot$ is element-wise multiplication. 
The functional form of the modified elastic net regularization is as follows,
\begin{gather*}
R_2(\bm_t,\bm_s) \eqdef \abs{\frac{||\bm_{ts}||_2^2}{||\bm_s||_1} - \beta\epsilon^2}, \hspace{50pt}
R_1(\bm_t,\bm_s) \eqdef \abs{\frac{||\bm_{ts}||_1}{||\bm_s||_1} - \beta\epsilon} \\
R(\bm_t,\bm_s) \eqdef  \frac{1-\sigma(\alpha)}{2} R_2(\bm_t,\bm_s) + \sigma(\alpha) R_1(\bm_t,\bm_s).
\end{gather*}
The above formulation of $R_2(\cdot)$ and $R_1(\cdot)$ is motivated as follows.
First, we normalize both norms by dividing with the effective input dimension, $||\bm_s||_1$, 
which is done to be invariant to the (effective) input size. Second, we define $R_2$ and $R_1$ as 
absolute errors, which encourages each entry to be, on average, approximately equal to the threshold $\epsilon$. 
The reason is that the vector $\bm_t$ passes through a binary threshold, and though the exact values of its entries are irrelevant. What is relevant is whether these values are within epsilon neighborhood of zero or not. Thus, when the values are roughly equal to the threshold, it is more likely to converge to a balanced point where the regularization term is low and the relevant features were selected. The threshold term is controlled by $\beta$ (a hyperparameter), which controls the cardinality of $\bm_t$, where smaller values of $\beta$ lead to sparser $\bm_t$.



Finally, the functional form of a DNNF block with the feature selection component is obtained by plugging the masks into Equation~(\ref{eq:first_layer}), $L = \tanh\paren{\bx^T \diag(T(\bm_t))\diag(\bm_s)W + \bb} \in \reals^m$.
Additionally, the mean over $R(\bm_t,\bm_s)$ in all \DNNF s is added to the loss function as a regularizer.

\subsection{Spacial Localization}
\label{sec:localization}
The last element we incorporate in the $\DNFNET$ construction
is \emph{spatial localization}. This element encourages
each DNNF unit in a $\DNFNET$ ensemble  
to specialize in some focused proximity of the input domain.
Localization is a well-known technique in classical machine learning, with various implementations and applications (see, e.g., \cite{jacobs1991adaptive,meir2000localized}). 
On the one hand, localization allows construction of low-bias experts. On the other hand, it helps promote diversity, and reduction of the correlation between experts, which can improve the performance of an ensemble \cite{jacobs1997bias,derbeko2002variance}.

We incorporate spatial localization by associating a Gaussian kernel $\loc(\bx | \bmu, \bSigma)_i$ 
with a trainable mean vector $\bmu_i$ and a trainable diagonal covariance matrix $\bSigma_i$ for
the $i$th DNNF. Given a $\DNFNET$ with $n$ DNNF blocks, the functional form of its embedding layer (Equation \ref{eq:embedding}), with the spatial localization, is
\begin{align*}
    \loc(\bx | \bmu, \bSigma) &\eqdef [e^{-||\bSigma_1(\bx-\bmu_1)||_2}, e^{-||\bSigma_2(\bx-\bmu_2)||_2}, \dots, e^{-||\bSigma_n(\bx-\bmu_n)||_2}] \in \reals^n \\
    \smloc(\bx | \bmu, \bSigma) &\eqdef \Softmax \left\{ \loc(\bx | \bmu, \bSigma) \cdot \sigma(\tau) \right\} \in (0, 1)^n\\
    E(\bx) &\eqdef [\smloc(\bx | \bmu, \bSigma)_1 \cdot \DNNF_1(\bx), \ldots, \smloc(\bx | \bmu, \bSigma)_n \cdot \DNNF_n(\bx)],
\end{align*}
where $\tau \in \reals$ is a trainable parameter such that $\sigma(\tau)$ 
serves as the trainable temperature in the softmax.
The inclusion of an adaptive temperature in this localization mechanism
facilitates a data-dependent degree of exclusivity: at high temperatures,
only a few DNNFs will handle an input instance whereas at low temperatures, 
more DNNFs will effectively participate in the ensemble.
Observe that our localization mechanism is fully trainable and does not add any
hyperparameters.


\section{DNFs and Trees -- A VC Analysis}
\label{sec:theoretical}

The basic unit in our construction is a (soft) DNF formula instead of a tree. Here we provide a theoretical perspective on this design choice. Specifically, we analyze the VC-dimension of 
Boolean DNF formulas and compare it to that of decision trees. With this analysis we gain some insight into the generalization ability of formulas 
and trees, and argue numerically that the generalization of a DNF can be superior to a tree when the input dimension is not small (and vice versa).

Throughout this discussion, we consider binary classification problems
whose instances are Boolean vectors in $\{0,1\}^n$.
The first simple observation is that
every decision tree has an equivalent DNF formula. Simply,
each tree path from the root to a positively labeled leaf can be expressed 
by a conjunction of the conditions over the features appearing along the path to the leaf, and the whole tree can be represented by a disjunction of the resulting conjunctions.
However, DNFs and decision trees are not equivalent,
and we demonstrate that in the lense of VC-dimension.
 Simon et al. \cite{Simon} presented an exact expression for the VC-dimension of decision trees as a function of the tree \emph{rank}. 

\begin{definition}[Rank]
Consider a binary tree $T$. 
If $T$ consists of a single node, its rank is defined as 0. 
If $T$ consists of a root, a left subtree $T_0$ of rank $r_0$, and a right subtree $T_1$ of rank $r_1$, then   
\begin{equation*}
    rank(T)=\begin{cases}
			1 + r_0 & \text{if $r_0$ = $r_1$} \\
            \max\{r_0, r_1\} & \text{else}
		 \end{cases}
\end{equation*}
\end{definition}

\begin{wrapfigure}{R}{0.4\textwidth}
 \vspace{-0.15in}
    \centering
    \includegraphics[width=0.4\textwidth]{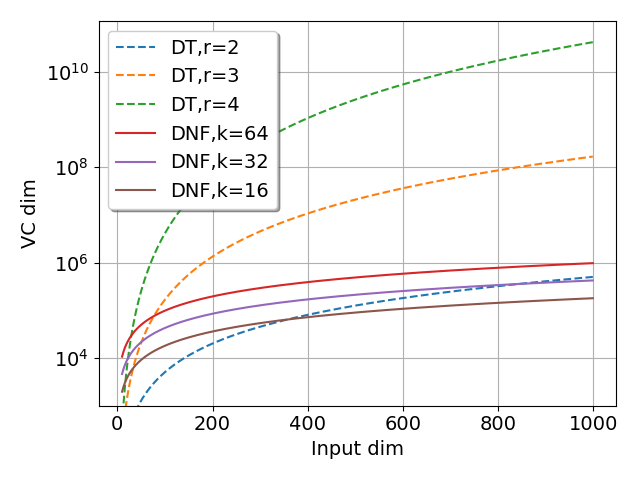}
    \caption{$VCDim(DT_n^r)$ and the upper bound on $VCDim(DNF_n^k)$ (log scale) as a function of the input dimension}
    \vspace{-0.4in}
    \label{fig:VCDim}
\end{wrapfigure}

Clearly, for any decision tree $T$ over $n$
variables, $1 \leq rank(T) \leq n$.
Also, it is not hard to see that a binary tree $T$ has a rank greater than  $r$ iff the complete binary tree of depth $r+1$ can be embedded into $T$. 

\begin{theorem}[Simon, \cite{Simon}]
\label{theorem:VCdimDT}
Let $DT_n^r$ denote the class of decision trees of rank at most $r$ on $n$ Boolean variables. Then it holds that
$
    VCDim(DT_n^r) = \sum_{i=0}^r \binom{n}{i} .
$
\end{theorem}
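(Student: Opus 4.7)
The plan is to prove the two-sided bound by matched inductions on $n + r$, both exploiting Pascal's identity $\binom{n}{i} = \binom{n-1}{i} + \binom{n-1}{i-1}$, which telescopes to
\[
\sum_{i=0}^r \binom{n}{i} \;=\; \sum_{i=0}^r \binom{n-1}{i} \;+\; \sum_{i=0}^{r-1} \binom{n-1}{i}.
\]
This decomposition mirrors the recursive structure of rank: a tree on $n$ variables with root split on some coordinate $x_j$ has two subtrees on $n-1$ variables, and the rank is $r$ in exactly the cases where the two subtrees have ranks $\{r, r'\}$ with $r' < r$ or both equal to $r-1$. I will argue the lower bound $\geq$ by exhibiting a shattered set, and the upper bound $\leq$ by induction using a Pajor-type decomposition of any candidate shattered set.

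For the lower bound, I would exhibit the explicit set $V_n^r \eqdef \{v \in \{0,1\}^n : |v| \leq r\}$, which has cardinality $\sum_{i=0}^r \binom{n}{i}$, and show by induction on $n+r$ that every labeling $\ell : V_n^r \to \{0,1\}$ is realized by some $T \in DT_n^r$. For the inductive step, I split on $x_n$ at the root: the left subtree must realize $\ell$ on $\{v \in V_n^r : v_n = 0\}$, which is in bijection with $V_{n-1}^r$ (rank $\leq r$ by induction), and the right subtree must realize $\ell$ on $\{v \in V_n^r : v_n = 1\}$, which is in bijection with $V_{n-1}^{r-1}$ (rank $\leq r-1$ by induction). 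Since the two subtree ranks differ, the resulting tree has rank $\max(r, r-1) = r$, finishing the construction.

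For the upper bound, given $S$ shattered by $DT_n^r$, I fix a coordinate $j$ and decompose $|S| = |\pi(S)| + |\delta(S)|$, where $\pi(S)$ is the projection of $S$ onto the other $n-1$ coordinates and $\delta(S) \subseteq \pi(S)$ is the set of projections whose \emph{both} liftings (with $x_j = 0$ and $x_j = 1$) lie in $S$. I then want to establish: (a) $\pi(S)$ is shattered by $DT_{n-1}^r$, and (b) $\delta(S)$ is shattered by $DT_{n-1}^{r-1}$; combining these with the inductive hypothesis and the Pascal identity yields $|S| \leq \sum_{i=0}^r \binom{n-1}{i} + \sum_{i=0}^{r-1} \binom{n-1}{i} = \sum_{i=0}^r \binom{n}{i}$. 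Claim (a) is easy: extend any labeling of $\pi(S)$ to $S$ by copying labels across each pair, invoke shattering of $S$ to obtain a tree in $DT_n^r$, and restrict by fixing $x_j$ to either value -- restriction cannot increase rank.

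The main obstacle is claim (b): the rank must genuinely drop to $r-1$ on $\delta(S)$. The intuition is that realizing an arbitrary labeling on $\delta(S)$ on the $x_j=1$ side together with its \emph{complement} on the $x_j=0$ side is only possible if the realizing tree branches on $x_j$ at a position that forces both surviving subtrees to carry nontrivial rank, and the rank recurrence $\mathrm{rank}(T) = 1 + r_0$ whenever $r_0 = r_1$ then forces each restricted tree to have rank at most $r-1$. Making this precise requires either a careful structural analysis of how the rank recursion interacts with restriction along $x_j$, or a shifting/compression preprocessing on $S$ that normalizes the set so that a single coordinate $j$ simultaneously witnesses the rank drop for all labelings of $\delta(S)$. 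This rank-consumption step is the heart of the theorem; once it is in hand, the remaining combinatorics merely chase the Pascal recurrence to match the claimed closed form.
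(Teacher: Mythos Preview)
The paper does not contain a proof of this theorem at all: it is stated as a known result due to Simon and simply cited (the only proof in the appendix is for Theorem~\ref{theorem:VCdimDNF}, the DNF VC bound). So there is no ``paper's own proof'' to compare your proposal against.

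That said, your outline is essentially the standard inductive route Simon uses, built around the Pascal recurrence matching the rank recursion. Your lower-bound construction (shattering the Hamming ball $V_n^r$ by splitting on one coordinate and invoking the inductive hypothesis on $V_{n-1}^r$ and $V_{n-1}^{r-1}$) is correct and clean. For the upper bound, you have correctly located the only nontrivial step: claim~(b), that $\delta(S)$ is shattered by $DT_{n-1}^{r-1}$. Your intuition --- force the two liftings of each point in $\delta(S)$ to receive opposite labels, so that any realizing tree of rank $\le r$ must query $x_j$ on every relevant path, and then exploit the rank recurrence to drop the rank by one upon restriction --- is the right idea, but as you yourself note, turning this into an airtight argument requires care: the tree need not query $x_j$ at the root, and restriction along $x_j$ interacts with the rank recursion in a way that has to be tracked through the whole tree, not just at one node. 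Until that step is written out in full (Simon does this via an explicit structural induction on the tree), the upper bound remains a sketch rather than a proof.
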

The following theorem, whose proof appears in  Appendix~\ref{sec:proof_theorem1},
upper bounds the VC-dimension of a Boolean DNF formula.

\begin{theorem}[DNF VC-dimension bound]
\label{theorem:VCdimDNF}
Let $DNF_n^k$ be the class of DNF formulas with $k$ conjunctions on $n$ Boolean variables. Then it holds that
$
VCDim(DNF_n^k) \leq 2(n+1)k\log(3k) .
$
\end{theorem}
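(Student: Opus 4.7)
}
The plan is to apply the elementary inequality $VCDim(\mathcal{H}) \leq \log_2 |\mathcal{H}|$, valid for any finite hypothesis class: shattering a set of $d$ points forces $\mathcal{H}$ to realize at least $2^d$ distinct labelings, so $|\mathcal{H}| \geq 2^d$. The remaining work is then a clean enumeration of $DNF_n^k$ as a set of functions on $\{0,1\}^n$.

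First I would count the syntactically distinct conjunctions over $n$ Boolean variables. Each variable $x_i$ is in exactly one of three states -- included as a positive literal, included as a negative literal, or absent -- which yields at most $3^n$ conjunctions (this is an upper bound on the number of conjunctive functions, since some syntactically distinct conjunctions, e.g., those containing both $x_i$ and $\bar x_i$, may coincide). I would then lift this count to the DNF class by observing that any formula in $DNF_n^k$ is determined by an ordered $k$-tuple of conjunctions; reorderings and repetitions can only overcount, so $|DNF_n^k| \leq (3^n)^k = 3^{nk}$. Plugging this into the cardinality bound gives $VCDim(DNF_n^k) \leq nk \log_2 3$.

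Finally, I would reconcile with the stated form by a trivial slackening: since $\log_2 3 < 2$ and $\log(3k) \geq \log 3 \geq 1$ for every $k \geq 1$ (regardless of whether $\log$ is base $2$ or natural), one has
\[
nk \log_2 3 \;\leq\; 2nk \;\leq\; 2(n+1)k \log(3k),
\]
which is exactly the desired bound. The main ``obstacle'' here is essentially nonexistent: the stated inequality is actually weaker than the direct enumeration bound. The only point deserving mild care is the distinction between syntactically distinct formulas and the functions they compute on $\{0,1\}^n$; because we only require an \emph{upper} bound on the number of functions, counting formulas is enough. (An alternative route would go through Sauer--Shelah applied to the conjunction class, which has VC-dimension $n$, and then use the standard growth-function bound for unions of $k$ classes, $\Pi_{DNF_n^k}(m) \leq \Pi_{\mathrm{CONJ}}(m)^k$; this produces a bound of the same $O(nk\log k)$ form but is more cumbersome than direct counting.)
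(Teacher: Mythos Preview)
Your argument is correct, but it is not the route the paper takes. The paper proceeds in two steps: first it embeds every conjunction over $n$ Boolean variables into a rank-$1$ decision tree and invokes Theorem~\ref{theorem:VCdimDT} (Simon) to get $VCDim(\text{conjunctions}) \leq \sum_{i=0}^{1}\binom{n}{i} = n+1$; second it applies Blumer et al.'s $k$-fold union bound, $VCDim(D_k(C)) \leq 2\,VCDim(C)\,k\log(3k)$, which immediately yields $2(n+1)k\log(3k)$. By contrast, you bypass both ingredients with a direct enumeration $|DNF_n^k|\leq 3^{nk}$ and the elementary bound $VCDim\leq \log_2|\mathcal{H}|$, obtaining the stronger $nk\log_2 3$, which you then slacken to match the stated inequality. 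Your approach is shorter and in fact sharper; the paper's route, on the other hand, explains exactly where the factors $(n+1)$ and $\log(3k)$ in the statement come from and ties the DNF bound back to the decision-tree VC result used elsewhere in the discussion. The alternative you sketch at the end (conjunctions have VC-dimension $\leq n$, then lift via the union growth-function bound) is essentially the paper's argument, except that the paper obtains the conjunction bound through rank-$1$ trees rather than directly.
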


It is evident that in the case of DNF formulas the upper bound on 
the VC-dimension grows linearly with the input dimension, whereas in the case of decision trees, if the rank is greater than 1, the VC-dimension
grows polynomially (with degree at least 2) with the input dimension. In the worst case, this growth is exponential.
A direct comparison of these dimensions is not trivial because there is a complex dependency between the rank $r$ of a decision tree, and the number $k$ of the conjunctions of an equivalent DNF formula. Even if we compare 
large-$k$ DNF formulas to small-rank trees, it is 
clear that the VC-dimension of the trees can be significantly larger. For example, in Figure~\ref{fig:VCDim}, we plot the upper bounds on the VC-dimension of large formulas (solid
curves), and the exact VC-dimensions of small-rank trees 
(dashed curves). With the exception of rank-2 trees, 
the VC-dimension of decision trees dominates the dimension of DNFs,
when the input dimension exceeds 100.
Trees, however, may have an advantage over DNF formulas for low-dimensional inputs.

Since the VC-dimension is a qualitative proxy of the sample complexity of a hypothesis class, the above analysis provides theoretical motivation for expressing trees 
using DNF formulas when the input dimension is not small.
Having said that, the disclaimer is that in the present discussion we have only considered binary problems. Moreover, the final hypothesis classes of both DNF-Nets 
and GBDTs are more complex in structure.


\section{Empirical Study}
In this section, we present an empirical study that substantiates the design of $\DNFNET$s and
convincingly shows its significant advantage over FCN architectures.
The datasets used in this study are from Kaggle competitions 
and OpenML
\citep{vanschoren2014openml}. A summary of these datasets appears in Appendix \ref{sec:Appendix_datasets_description}.
All results presented in this work were obtained using a massive grid search for optimizing each model's hyperparameters. A detailed description of the grid search process with additional details can be found in Appendices \ref{appendix:data partition and grid search process appendix}, \ref{appendix:Training details}. We present the scores for each dataset according to the score function defined in the Kaggle competition we used, log-loss and area under ROC curve (AUC ROC) for multiclass datasets and binary datasets, respectively. All results are the mean of the test scores over five different partitions, and the standard error of the mean is reported.

{\bf The merit of the different DNF-Net components.}
We start with two different ablation studies, where we evaluate the contributions
of the three \DNFNET~components.
In the first study, we start with a vanilla three-hidden-layer FCN and gradually add each component separately. In the second study, we start each experiment with the complete $\DNFNET$ and leave one component out each time. In each study, we present the results on three
real-world datasets, where all results are test log-loss scores (lower is better), out-of-memory (OOM) entries mean that the network was too 
large to execute on our machine (see Appendix~\ref{appendix:Training details}).
More technical details can be found in Appendix \ref{appendix:Ablation Studies - technical details}.

\begin{table}[h]
\centering
\caption{\textbf{Gradual study} (test log-loss scores)}
 \scalebox{0.85}{
  \begin{tabular}{lYYYGGGBBB}
    \toprule
    \multirow{1}{*}{\textbf{Dataset}} &
      \multicolumn{3}{>{\columncolor{yellow}}c}{\textbf{Eye Movements}} &
      \multicolumn{3}{>{\columncolor{green}}c}{\textbf{Gesture Phase}} &
      \multicolumn{3}{>{\columncolor{blue}}c}{\textbf{Gas Concentrations}} \\
      \textbf{ \# formulas} & {\textbf{128}} & {\textbf{512}} & {\textbf{2048}} & {\textbf{128}} & {\textbf{512}} & {\textbf{2048}} & {\textbf{128}} & {\textbf{512}} & {\textbf{2048}} \\
      \midrule
    \thead{Exp 1: Fully \\ trained FCN} & \thead{0.9864 \\ {\sspm{0.0038}}} & \thead{1.0138 \\ {\sspm{0.0083}}} &	OOM &	\thead{1.3139 \\{\sspm{0.0067}}} &	\thead{1.3368 \\{\sspm{0.0084}}} &	OOM	& \thead{0.2423 \\{\sspm{0.0598}}} &	\thead{0.3862 \\{\sspm{0.0594}}} &	OOM \\
    
    \thead{Exp 2: Adding \\ DNF structure} & \thead{0.9034 \\{\sspm{0.0058}}} &	\thead{0.9336 \\{\sspm{0.0058}}} &	\thead{1.3011 \\{\sspm{0.0431}}} &	\thead{1.1391 \\{\sspm{0.0059}}} &	\thead{1.1812 \\{\sspm{0.0117}}} &	\thead{1.8633 \\{\sspm{0.1026}}} &	\thead{0.0351 \\{\sspm{0.0048}}} &	\thead{0.0421 \\{\sspm{0.0046}}} &	\thead{0.0778 \\{\sspm{0.0080}}} \\
    
    \thead{Exp 3: Adding \\ feature selection} &
    \thead{0.8134 \\{\sspm{0.0142}}}&	\thead{0.8163 \\{\sspm{0.0096}}}&	 \thead{0.9652 \\{\sspm{0.0143}}}&	\thead{1.1411 \\{\sspm{0.0093}}}&	\thead{1.1320 \\{\sspm{0.0083}}}&	\thead{1.3015 \\{\sspm{0.0317}}}&	\thead{0.0227 \\{\sspm{0.0019}}}&	\thead{0.0265 \\{\sspm{0.0012}}}&	\thead{0.0516 \\{\sspm{0.0061}}} \\

    \thead{Exp 4: Adding \\ localization} & \thead{0.7621 \\{\sspm{0.0079}}}&	\thead{0.7125 \\{\sspm{0.0077}}}&	\thead{\textbf{0.6903} \\{\sspm{0.0049}}}&	\thead{0.9742 \\{\sspm{0.0079}}}&	\thead{0.9120 \\{\sspm{0.0123}}}&	\thead{\textbf{0.8770} \\{\sspm{0.0088}}}&	\thead{0.0162 \\{\sspm{0.0013}}}&	\thead{0.0149 \\{\sspm{0.0008}}}&	\thead{\textbf{0.0145} \\{\sspm{0.0011}}} \\
        
    \bottomrule
  \end{tabular}}
  \label{tab:abl1}
\end{table}

Consider Table~\ref{tab:abl1}. In \textbf{Exp 1} we start with a vanilla three-hidden-layer FCN with a $\tanh$ activation. To make a fair comparison, we defined the widths of the layers according to the widths in the $\DNFNET$ with the corresponding formulas. In \textbf{Exp 2}, we added the DNF structure to the networks from Exp 1 (see Section \ref{sec:DNF_structure}). In \textbf{Exp 3} we added the feature selection component (Section \ref{sec:feature_selection}). 
In is evident that performance is monotonically improving, where the best results are clearly obtained on the complete $\DNFNET$ (Exp 4).
A subtle but important observation is that in all of the first three experiments, for all  datasets, the trend is that the lower the number of formulas, the better the score. This trend is reversed in \textbf{Exp 4}, where the localization component (Section \ref{sec:localization}) is added, highlighting the importance of using all components of the \DNFNET~representation in concert.

\begin{table}[h]
\centering
\caption{\textbf{Leave one out study} (test log-loss scores)}
 \scalebox{0.85}{
  \begin{tabular}{lYYYGGGBBB}
    \toprule
    \multirow{1}{*}{\textbf{Dataset}} &
      \multicolumn{3}{>{\columncolor{yellow}}c}{\textbf{Eye Movements}} &
      \multicolumn{3}{>{\columncolor{green}}c}{\textbf{Gesture Phase}} &
      \multicolumn{3}{>{\columncolor{blue}}c}{\textbf{Gas Concentrations}} \\
      \textbf{\# formulas} & {\textbf{128}} & {\textbf{512}} & {\textbf{2048}} & {\textbf{128}} & {\textbf{512}} & {\textbf{2048}} & {\textbf{128}} & {\textbf{512}} & {\textbf{2048}} \\
      \midrule
      
    \thead{Exp 4: Complete \\ \DNFNET} & \thead{0.7621 \\{\sspm{0.0079}}}&	\thead{0.7125 \\{\sspm{0.0077}}}&	 \thead{\textbf{0.6903} \\{\sspm{0.0049}}}&	\thead{0.9742 \\{\sspm{0.0079}}}&	\thead{0.9120 \\{\sspm{0.0123}}}&	 \thead{\textbf{0.8770} \\{\sspm{0.0088}}}&	\thead{0.0162 \\{\sspm{0.0013}}}&	\thead{0.0149 \\{\sspm{0.0008}}}&	\thead{\textbf{0.0145} \\{\sspm{0.0011}}} \\

    \thead{Exp 5: Leave \\ feature selection out} &
    \thead{0.8150 \\{\sspm{0.0046}}}&	\thead{0.8031 \\{\sspm{0.0046}}}&	 \thead{0.7969 \\{\sspm{0.0054}}}&	\thead{0.9732 \\{\sspm{0.0082}}}&	\thead{0.9479 \\{\sspm{0.0081}}}&	 \thead{0.9438 \\{\sspm{0.0111}}}&	\thead{0.0222 \\{\sspm{0.0018}}}&	\thead{0.0205 \\{\sspm{0.0021}}}&	\thead{0.0200 \\{\sspm{0.0022}}} \\

    \thead{Exp 6: Leave \\ localization out} & 
    \thead{0.8134 \\{\sspm{0.0142}}}&	\thead{0.8163 \\{\sspm{0.0096}}}&	 \thead{0.9652 \\{\sspm{0.0143}}}&	\thead{1.1411 \\{\sspm{0.0093}}}&	\thead{1.1320 \\{\sspm{0.0083}}}&	 \thead{1.3015 \\{\sspm{0.0317}}}&	\thead{0.0227 \\{\sspm{0.0019}}}&	\thead{0.0265 \\{\sspm{0.0012}}}&	\thead{0.0516 \\{\sspm{0.0061}}}\\

    \thead{Exp 7: Leave DNF \\ structure out} & \thead{0.8403 \\{\sspm{0.0068}}}&	\thead{0.8128 \\{\sspm{0.0077}}}& OOM &	\thead{1.1265 \\{\sspm{0.0066}}}&	\thead{1.1101 \\{\sspm{0.0077}}}& OOM &	\thead{0.0488 \\{\sspm{0.0038}}}&	\thead{0.0445 \\{\sspm{0.0024}}}& OOM \\
        
    \bottomrule
  \end{tabular}}
    \label{tab:abl2}
\end{table}

Now consider Table~\ref{tab:abl2}.
In \textbf{Exp 5} we took the complete \DNFNET~ (Exp 4) and removed the feature selection component. When considering the Gesture Phase dataset, an interesting phenomenon is observed. In  Exp 3 (128 formulas), we can see that the contribution of the feature selection component is negligible, but in Exp 5 (2048 formulas) we see the significant contribution of this component. We believe that the reason for this difference lies in the relationship of the feature selection component with the localization component, where this connection intensifies the contribution of the feature selection component. In \textbf{Exp 6} we took the complete \DNFNET~ (Exp 4) and removed the localization component (identical to Exp 3). We did the same in \textbf{Exp 7} where we removed the DNF structure. In general, it can be seen that removing each component results in a decrease in performance.

{\bf An analysis of the feature selection component.}
Having studied the contribution of the three
components to $\DNFNET$, we now focus 
on the learnable part of the feature selection component (Section~\ref{sec:feature_selection}) alone, and examine its effectiveness using a series of synthetic tasks with a varying percentage of irrelevant features. Recall that when considering a single DNNF
block, the feature selection is a learnable binary mask that multiplies the input element-wise. Here we examine the effect of this mask on a vanilla FCN network (see technical details in Appendix \ref{appendix:Feature selection analysis}).
The synthetic tasks we use were introduced by Yoon et al.\cite{yoon2018invase}, and Chen et al.\cite{chen2018learning}, where they were used as synthetic experiments to test feature selection. 
There are six different dataset settings; exact 
specifications appear in Appendix \ref{appendix:Feature selection analysis}. 
For each dataset, we generated seven different instances that differ in their input size. While increasing the input dimension $d$, the same logit is used for prediction, so the new features are irrelevant, and as $d$ gets larger, the percentage of relevant features becomes smaller.

We compare the performance of a vanilla FCN on three different cases: (1) oracle (ideal) feature selection (2) our (learned) feature selection mask, and (3) no feature selection. (See  details in Appendix \ref{appendix:Feature selection analysis}). Consider the graphs in Figure~\ref{fig:synth},
which demonstrate several interesting insights.
In all tasks the performance of the vanilla FCN is sensitive to irrelevant features, probably due to the representation power of the FCN, which is prone to overfitting.
On the other hand, by adding the feature selection component, we obtain near oracle performance on the first three tasks, and a significant improvement on the three others. Moreover, these results support our observation from the ablation studies: that the application of localization  together with feature selection increases the latter's contribution. We can see that in Syn1-3 where there is a single interaction, the results are better than in Syn4-6 where the input space is divided into two `local' sub-spaces with different interactions.
These experiments indicate that the learnable feature selection we propose can have independent value.

\begin{table}[h!]
\begin{center}
 \scalebox{0.95}{
 \begin{tabular}{||l c c c c||} 
 \hline
 Dataset & Test Metric  & \DNFNET & XGBoost & FCN \\ [0.5ex] 
 \hline\hline
     Otto Group             & log-loss      & $\mathbf{45.600 \pm 0.445}$     & $45.705 \pm 0.361$             & $47.898 \pm 0.480$\\ 
 \hline
    Gesture Phase           & log-loss      & $86.798 \pm 0.810$             & $\mathbf{81.408 \pm 0.806}$    & $102.070 \pm 0.964$\\ 
 \hline
    Gas Concentrations      & log-loss      & $\mathbf{1.425 \pm 0.104}$     & $2.219 \pm 0.219$              & $5.814 \pm 1.079$\\ 
 \hline
    Eye Movements           & log-loss      & $68.037 \pm 0.651$             & $\mathbf{57.447 \pm 0.664}$    & $78.797 \pm 0.674$\\ 
 \hline
   Santander Transaction   & roc auc       & $88.668 \pm 0.128$             & $\mathbf{89.682 \pm 0.165}$    & $86.722 \pm 0.158$\\
 \hline
    House                   & roc auc       & $95.451 \pm 0.092$             & $\mathbf{95.525 \pm 0.138}$    & $95.164 \pm 0.103$\\ 
 \hline

\end{tabular}}
\end{center}
\caption{Mean test results on tabular datasets and standard error of the mean. We present the ROC AUC (higher is better) as a percentage, and the log-loss (lower is better) with an x100 factor.}
\label{tab:tabular_results}
\end{table}

{\bf Comparative Evaluation. }
Finally, we compare the performance of $\DNFNET$
vs. the baselines.  Consider Table~\ref{tab:tabular_results}
where we examine the performance of $\DNFNET$s on six real-life tabular datasets (We add three larger datasets to those we used in the ablation studies). We compare our performance to XGboost \cite{chen2016xgboost}, the widely used implementation of GBDTs, and to FCNs.
For each model, we optimized its critical hyperparameters. This optimization process required many computational resources: thousands of configurations have been tested for FCNs, hundreds of configurations for XGBoost, and only a few dozen for $\DNFNET$. A detailed description of the grid search we used for each model can be found in Appendix \ref{appendix:grid_params}.
In Table 3, we see that $\DNFNET$ consistently and significantly outperforms FCN over all the six datasets. While obtaining better than or indistinguishable results from XGBoost over two datasets, on the other datasets, $\DNFNET$ is slightly inferior but in the same ball park as XGBoost.

\begin{figure}[th]
\begin{subfigure}{0.33\textwidth}
\includegraphics[width=0.9\linewidth]{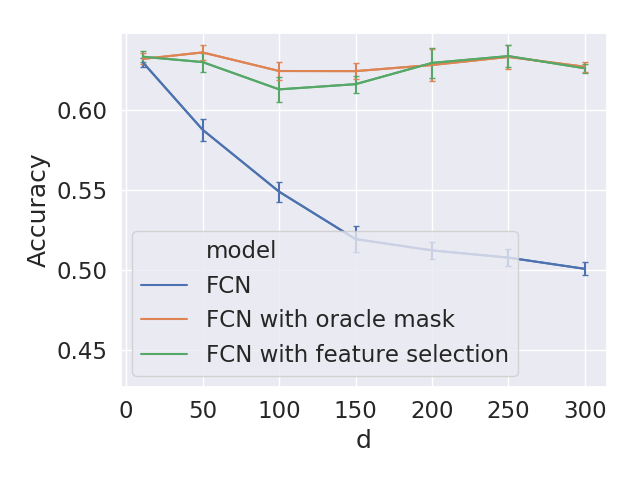}
\caption{Syn1}
\end{subfigure}
\begin{subfigure}{0.33\textwidth}
\includegraphics[width=0.9\linewidth]{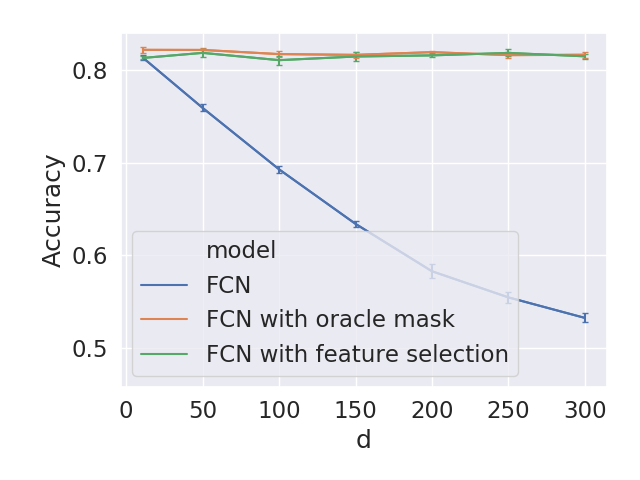}
\caption{Syn2}
\end{subfigure}
\begin{subfigure}{0.33\textwidth}
\includegraphics[width=0.9\linewidth]{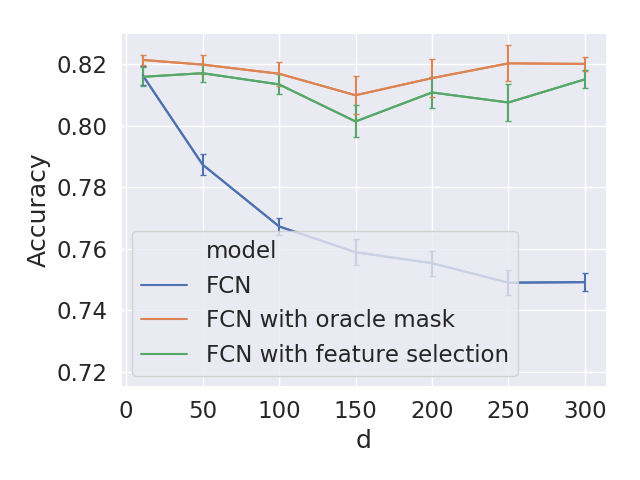}
\caption{Syn3}
\end{subfigure}

\begin{subfigure}{0.33\textwidth}
\includegraphics[width=0.9\linewidth]{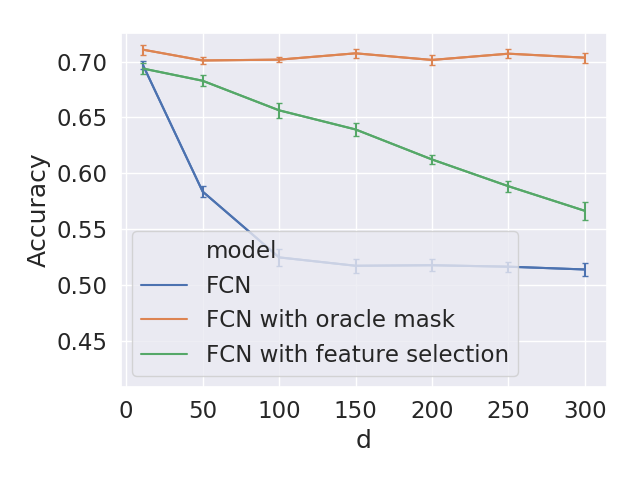}
\caption{Syn4}
\end{subfigure}
\begin{subfigure}{0.33\textwidth}
\includegraphics[width=0.9\linewidth]{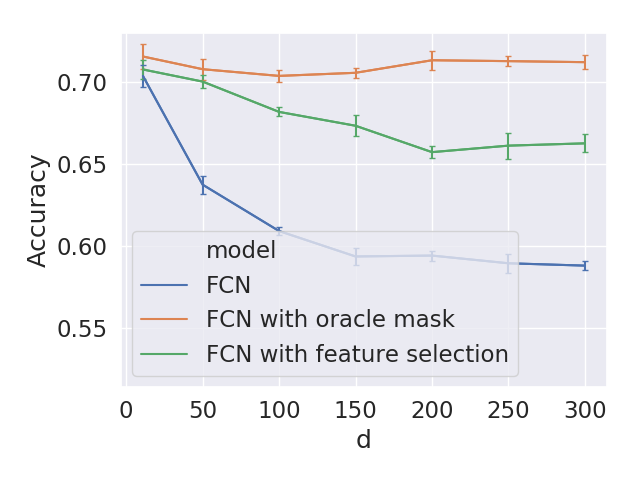}
\caption{Syn5}
\end{subfigure}
\begin{subfigure}{0.33\textwidth}
\includegraphics[width=0.9\linewidth]{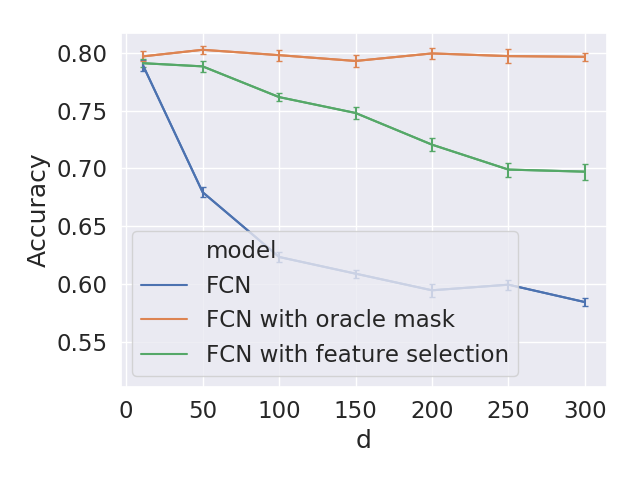}
\caption{Syn6}
\end{subfigure}

\caption{The results on the six synthetic experiments. For each experiment we present the test accuracy (with an error bar of the standard error of the mean) as a function of the input dimension $d$.}
\label{fig:synth}
\end{figure}


\section{Related Work}
\label{sec:related}
There have been  a few attempts to construct neural networks with improved performance on tabular data.
A recurring idea in some of these works is the explicit use of conventional decision tree induction algorithms, such as ID3 \citep{quinlan1979discovering}, or conventional forest methods, such as GBDT \citep{friedman2001greedy} that are 
trained over the data at hand, and then parameters of the resulting decision trees are explicitly or implicitly ``imported'' into a neural network using teacher-student distillation \citep{ke2018tabnn}, 
explicit embedding of tree paths in a specialized network architecture with some kind of DNF structure \citep{seyedhosseini2015disjunctive},
and explicit utilization of forests as the main building block of layers \citep{feng2018multi}.
This reliance on conventional decision tree or forest methods as an integral part of the proposed solution prevents end-to-end neural optimization, as we propose here. This deficiency is not only a theoretical nuisance but also makes it hard to use such models on very large datasets and in combination with other neural modules.

A few other recent techniques aimed to cope with tabular data using pure neural optimization as we propose here.   
\cite{yang2018deep} considered a method to approximate a single node of a decision tree using a soft binning function that transforms continuous features into one-hot features. 
While this method obtained results comparable to a single decision tree and an FCN (with two hidden layers), it is limited to settings where the number of features is small.
Popov et al. \cite{popov2019neural} proposed a network that combines elements of oblivious decision forests with dense residual networks. While this method achieved better results than GBDTs on several datasets, also FCNs 
achieved better than or indistinguishable results from GBDTs on most of these cases as well. 
Finally, focusing on microbiome data, a recent study \cite{shavitt2018regularization} presented an elegant regularization technique, which produces extremely sparse networks that are suitable for microbiome tabular datasets.

Soft masks for feature selection have been considered before and 
the advantage of using elastic net regularization in a variable selection task was presented by Zou and Hastie and others \cite{zou2005regularization,li2016deep}.

\section{Conclusions}
We introduced \DNFNET, a novel neural architecture whose inductive bias 
revolves around a disjunctive normal neural form, localization and feature selection. 
The importance of each of these elements has been demonstrated over real tabular data.
The results of the empirical study indicate convincingly that \DNFNETs 
consistently outperform FCNs over tabular data. While \DNFNETs do not consistently 
beat XGBoost, our results indicate that their performance score is not far behind.

We have left a number of potential incremental improvements and bigger challenges 
to future work. First, in our work we only considered classification problems.
We expect \DNFNETs to also be effective in regression problems, and it would also 
be interesting to consider 
applications in reinforcement learning over finite discrete spaces. 
It would be very interesting to consider deeper \DNFNET~architectures. For example,
instead of a single DNNF block, one can construct a stack of such blocks
to allow for more involved feature generation. 
Another interesting direction would be to consider training \DNFNETs using a 
gradient boosting procedure similar to that used in XGBoost.

Finally, a most interesting challenge that remains open is what would constitute 
a usable and effective inductive bias for tabular prediction tasks, which can 
elicit the best architectural designs for these data. 
Our successful application of DNNFs indicates that soft DNF formulas are 
quite effective, and are strictly significantly superior to fully connected networks, 
but we anticipate that further biases will be identified, at least for some families
of tabular tasks.

\section*{Broader Impact}
In this paper we present a new family of neural 
architectures for tabular data. Since much of the medical information on people has multiple modalities including a tabular form,
the societal effect and potential positive outcomes of this work are quite substantial, by contributing to our
ability to handle multi-modal data end-to-end using neural networks.
Negative consequences might appear, of course, if 
agents will utilize this technology to handle large scale tabular data to achieve some malicious 
objectives. However, there is no particular malicious application foreseen.

\bibliographystyle{plain}
\bibliography{references}

\begin{thebibliography}{10}

\bibitem{abadi2016tensorflow}
Martin Abadi, Paul Barham, Jianmin Chen, Zhifeng Chen, Andy Davis, Jeffrey
  Dean, Matthieu Devin, Sanjay Ghemawat, Geoffrey Irving, Michael Isard, et~al.
\newblock Tensorflow: A system for large-scale machine learning.
\newblock In {\em 12th $\{$USENIX$\}$ Symposium on Operating Systems Design and
  Implementation ($\{$OSDI$\}$ 16)}, pages 265--283, 2016.

\bibitem{anthony2005connections}
Martin Anthony.
\newblock Connections between neural networks and {Boolean} functions.
\newblock In {\em Boolean Methods and Models}, 2005.

\bibitem{bengio2013estimating}
Yoshua Bengio, Nicholas Leonard, and Aaron Courville.
\newblock Estimating or propagating gradients through stochastic neurons for
  conditional computation.
\newblock {\em arXiv preprint arXiv:1308.3432}, 2013.

\bibitem{blumer1989learnability}
Anselm Blumer, Andrzej Ehrenfeucht, David Haussler, and Manfred~K Warmuth.
\newblock Learnability and the vapnik-chervonenkis dimension.
\newblock {\em Journal of the ACM (JACM)}, 36(4):929--965, 1989.

\bibitem{chen2018learning}
Jianbo Chen, Le~Song, Martin~J Wainwright, and Michael~I Jordan.
\newblock Learning to explain: An information-theoretic perspective on model
  interpretation.
\newblock {\em arXiv preprint arXiv:1802.07814}, 2018.

\bibitem{chen2016xgboost}
Tianqi Chen and Carlos Guestrin.
\newblock Xgboost: A scalable tree boosting system.
\newblock In {\em Proceedings of the 22nd ACM SIGKDD International Conference
  on Knowledge Discovery and Data Mining}, pages 785--794. ACM, 2016.

\bibitem{derbeko2002variance}
Philip Derbeko, Ran El-Yaniv, and Ron Meir.
\newblock Variance optimized bagging.
\newblock In {\em European Conference on Machine Learning}, pages 60--72.
  Springer, 2002.

\bibitem{feng2018multi}
Ji~Feng, Yang Yu, and Zhi-Hua Zhou.
\newblock Multi-layered gradient boosting decision trees.
\newblock In {\em Advances in Neural Information Processing Systems}, pages
  3555--3565, 2018.

\bibitem{friedman2001greedy}
Jerome~H Friedman.
\newblock Greedy function approximation: a gradient boosting machine.
\newblock {\em Annals of Statistics}, pages 1189--1232, 2001.

\bibitem{HintonLectures}
Geoffrey Hinton.
\newblock Neural networks for machine learning coursera video lectures -
  geoffrey hinton.
\newblock 2012.

\bibitem{hubara2017quantized}
Itay Hubara, Matthieu Courbariaux, Daniel Soudry, Ran El-Yaniv, and Yoshua
  Bengio.
\newblock Quantized neural networks: Training neural networks with low
  precision weights and activations.
\newblock {\em The Journal of Machine Learning Research}, 18(1):6869--6898,
  2017.

\bibitem{jacobs1997bias}
Robert~A Jacobs.
\newblock Bias/variance analyses of mixtures-of-experts architectures.
\newblock {\em Neural computation}, 9(2):369--383, 1997.

\bibitem{jacobs1991adaptive}
Robert~A Jacobs, Michael~I Jordan, Steven~J Nowlan, and Geoffrey~E Hinton.
\newblock Adaptive mixtures of local experts.
\newblock {\em Neural Computation}, 3(1):79--87, 1991.

\bibitem{ke2017lightgbm}
Guolin Ke, Qi~Meng, Thomas Finley, Taifeng Wang, Wei Chen, Weidong Ma, Qiwei
  Ye, and Tie-Yan Liu.
\newblock Lightgbm: A highly efficient gradient boosting decision tree.
\newblock In {\em Advances in neural information processing systems}, pages
  3146--3154, 2017.

\bibitem{ke2018tabnn}
Guolin Ke, Jia Zhang, Zhenhui Xu, Jiang Bian, and Tie-Yan Liu.
\newblock Tabnn: A universal neural network solution for tabular data.
\newblock 2018.

\bibitem{li2016deep}
Yifeng Li, Chih-Yu Chen, and Wyeth~W Wasserman.
\newblock Deep feature selection: theory and application to identify enhancers
  and promoters.
\newblock {\em Journal of Computational Biology}, 23(5):322--336, 2016.

\bibitem{meir2000localized}
Ron Meir, Ran El-Yaniv, and Shai Ben-David.
\newblock Localized boosting.
\newblock In {\em COLT}, pages 190--199. Citeseer, 2000.

\bibitem{popov2019neural}
Sergei Popov, Stanislav Morozov, and Artem Babenko.
\newblock Neural oblivious decision ensembles for deep learning on tabular
  data.
\newblock {\em arXiv preprint arXiv:1909.06312}, 2019.

\bibitem{prokhorenkova2018catboost}
Liudmila Prokhorenkova, Gleb Gusev, Aleksandr Vorobev, Anna~Veronika Dorogush,
  and Andrey Gulin.
\newblock Catboost: unbiased boosting with categorical features.
\newblock In {\em Advances in neural information processing systems}, pages
  6638--6648, 2018.

\bibitem{quinlan1979discovering}
J~Ross Quinlan.
\newblock Discovering rules by induction from large collections of examples.
\newblock {\em Expert Systems in the Micro electronics Age}, 1979.

\bibitem{seyedhosseini2015disjunctive}
Mojtaba Seyedhosseini and Tolga Tasdizen.
\newblock Disjunctive normal random forests.
\newblock {\em Pattern Recognition}, 48(3):976--983, 2015.

\bibitem{shalev2014understanding}
Shai Shalev-Shwartz and Shai Ben-David.
\newblock {\em Understanding machine learning: From theory to algorithms}.
\newblock Cambridge university press, 2014.

\bibitem{shavitt2018regularization}
Ira Shavitt and Eran Segal.
\newblock Regularization learning networks: Deep learning for tabular datasets.
\newblock In {\em Advances in Neural Information Processing Systems}, pages
  1386--1396, 2018.

\bibitem{Simon}
Hans~Ulrich Simon.
\newblock On the number of examples and stages needed for learning decision
  trees.
\newblock In {\em Proceedings of the Third Annual Workshop on Computational
  Learning Theory}, COLT ’90, page 303–313, San Francisco, CA, USA, 1990.
  Morgan Kaufmann Publishers Inc.

\bibitem{vanschoren2014openml}
Joaquin Vanschoren, Jan~N Van~Rijn, Bernd Bischl, and Luis Torgo.
\newblock Openml: networked science in machine learning.
\newblock {\em ACM SIGKDD Explorations Newsletter}, 15(2):49--60, 2014.

\bibitem{yang2018deep}
Yongxin Yang, Irene~Garcia Morillo, and Timothy~M Hospedales.
\newblock Deep neural decision trees.
\newblock {\em arXiv preprint arXiv:1806.06988}, 2018.

\bibitem{yoon2018invase}
Jinsung Yoon, James Jordon, and Mihaela van~der Schaar.
\newblock Invase: Instance-wise variable selection using neural networks.
\newblock 2018.

\bibitem{zou2005regularization}
Hui Zou and Trevor Hastie.
\newblock Regularization and variable selection via the elastic net.
\newblock {\em Journal of the royal statistical society: series B (statistical
  methodology)}, 67(2):301--320, 2005.

\end{thebibliography}

\appendix

\newpage

\section*{\Large Apendices}

\section{OR and AND Gates}
\label{sec:orand}
The (soft) neural OR and AND gates were defined as
\begin{eqnarray*}
\OR(\bx) &\eqdef& \tanh\paren{\sum_{i=1}^{d}\bx_i + d - 1.5 }, \hspace{50pt}
\AND(\bx) \eqdef \tanh\paren{\sum_{i=1}^{d}\bx_i - d + 1.5 }.
\end{eqnarray*}
By replacing the $\tanh$ activation with a $\sign$ activation, and setting the 
bias term to 1 (instead of 1.5), we obtain exact 
binary gates,
\begin{eqnarray*}
\OR(\bx) &\eqdef& \sign\paren{\sum_{i=1}^{d}\bx_i + d - 1 }, \hspace{50pt}
\AND(\bx) \eqdef \sign\paren{\sum_{i=1}^{d}\bx_i - d + 1 }.
\end{eqnarray*}
Consider a binary vector $\bx \in \{\pm 1\}^d$. We prove that 
$$
\AND(\bx) \equiv \bigwedge_{i=1}^d \bx_i,
$$
where, in the definition of the logical ``and'', $-1$ is equivalent to 0.
If for any $1 \leq i \leq d$, $\bx_i = 1$, then $\wedge_{i=1}^d \bx_i = 1$.
Conversely, we have,
$$
\AND(\bx) = \sum_{i=1}^{d}\bx_i - d + 1 = d - d + 1 = 1,
$$
and the application of the $\sign$ activation yields 1. In the case of the soft neural AND gate, we get $tanh(1) \approx 0.76$; therefore, we set the bias term to 1.5 to get an output closer to 1 ($tanh(1.5) \approx 0.9$). 

Otherwise, there exists at least one index $1 \leq j \leq d$, such that $\bx_j = -1$,
and $\wedge_{i=1}^d \bx_i = -1$.
In this case,
$$
\AND(\bx) = \sum_{i=1}^{d}\bx_i - d + 1 = \bx_j + \sum_{i \neq j} \bx_i -d +1 \leq -1 + (d-1) -d + 1 = -1,
$$
and by applying the $\sign$ activation we obtain $-1$.
This proves that the $\AND(\bx)$ neuron is equivalent to a logical ``AND'' gate 
in the binary case. A very similar proof shows that
$$
\OR(\bx) \equiv \bigvee_{i=1}^d \bx_i.
$$

\section{Proof of Theorem 2}
\label{sec:proof_theorem1}
We bound the VC-dimension of a DNF formula in two steps. First, we derive an upper bound on the VC-dimension 
of a single conjunction, and then extend it to a disjunction of $k$ conjunctions.
We use the following simple lemma.
\begin{lemma}
\label{lemma:inclusion_of_hypotheses_set}
For every two hypothesis classes, $H' \subseteq H$, it  holds that $VCDim(H') \leq VCDim(H)$.
\end{lemma}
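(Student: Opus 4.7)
The plan is to prove the lemma directly from the definition of VC-dimension via shattering. Recall that $VCDim(H)$ is the cardinality of the largest set $S$ that is shattered by $H$, meaning that for every labeling (dichotomy) $f : S \to \{0,1\}$ there exists a hypothesis $h \in H$ that agrees with $f$ on $S$. I would begin by fixing an arbitrary finite set $S$ that is shattered by the smaller class $H'$, with the goal of showing that $S$ is also shattered by $H$.

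The key step is a one-line containment argument. Take any dichotomy $f : S \to \{0,1\}$. By the shattering assumption for $H'$, there is some $h' \in H'$ realizing $f$ on $S$. Since $H' \subseteq H$ by hypothesis, this same $h'$ lies in $H$ and witnesses that $f$ is realized by a member of $H$. Because $f$ was arbitrary, every dichotomy on $S$ is realized by $H$, so $S$ is shattered by $H$ as well.

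To conclude, I would appeal to the definition of VC-dimension as a supremum over cardinalities of shattered sets. Since every set shattered by $H'$ is shattered by $H$, the set of shattered-set cardinalities for $H'$ is a subset of the corresponding set for $H$, and taking the supremum (or maximum, in the finite case) of each preserves the inequality, yielding $VCDim(H') \leq VCDim(H)$. A minor technical point to handle cleanly is the case where $VCDim(H') = \infty$: in that scenario, for every $n$ there exists a shattered set of size $n$ for $H'$, hence also for $H$ by the above, so $VCDim(H) = \infty$ and the inequality still holds.

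There is no real obstacle here; the lemma is essentially a restatement of monotonicity of shattering under set inclusion. The only care needed is to be explicit about the definition being used (so that the one-line dichotomy-transfer argument clearly goes through) and to cover the infinite VC-dimension case so the proof is complete in all regimes.
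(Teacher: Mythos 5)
Your proof is correct and follows essentially the same route as the paper's: a set shattered by $H'$ is shattered by $H$ because the witnessing hypotheses for each dichotomy already lie in $H$, so the VC-dimension can only grow under inclusion. Your explicit handling of the infinite-dimension case is a small bonus the paper omits, but the core argument is identical.
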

\begin{proof}
Let $d =VCDim(H')$. By definition, there exist $d$ points that can be shattered by $H'$. 
Therefore, there exist $2^d$ hypotheses $\{h'_i\}_{i=1}^{2^d}$ in $H'$, which shatter these points. 
By assumption, $\{h'_i\}_{i=1}^{2^d} \subseteq H$, so $VCDim(H) \geq d$.
\end{proof}

For any conjunction on $n$ Boolean variables (regardless of the number of literals), it is possible to construct an equivalent decision tree of rank 1. 
The construction is straightforward.
If $\bigwedge_{i=1}^{\ell} x_i$ is the conjunction,
the decision tree  consists of a single main branch of $\ell$ internal 
decision nodes connected sequentially.
Each left child in this tree corresponds to decision ``1'', and each right child corresponds to decision ``0''.  
The root is indexed 1 and contains the literal $x_1$.
For $1 \leq i < \ell$, internal node $i$ contains the decision literal
$x_i$ and its left child is node $i+1$  (whose decision literal is $x_{i+1}$). 
See the example in Figure~\ref{fig:rank_one_DT}. 


It follows that the hypothesis class of conjunctions is contained in the class of rank-$1$ decision trees.
Therefore, by Lemma~\ref{lemma:inclusion_of_hypotheses_set} and Theorem \ref{theorem:VCdimDT}, the 
VC-dimension of conjunctions is bounded above by $n+1$.

We now derive the upper bound on the VC-dimension of a disjunction of $k$ conjunctions. Let $C$ be the
class of conjunctions, and let $D_k(C)$
be the class of a disjunction of $k$ conjunctions.
Clearly, $D_k(C)$ is a $k$-fold union of the class $C$,
namely,
$$
D_k(C) = \left\{ 
\bigcup_{i=0}^k c_i \ | c_i \in C 
\right\}.
$$
By Lemma 3.2.3 in Blummer et al. \cite{blumer1989learnability}, if  $d = VCDim(C)$, then for all $k \geq 1$, $VCDim(D_k(C)) \leq 2dk\log(3k)$. Therefore, for the class $DNF_n^k$, of DNF formulas with $k$ conjunctions on $n$ Boolean variables, we have
\begin{equation*}
    VCDim(DNF_n^k) \leq 2(n+1)k\log(3k) .
\end{equation*}

\begin{figure}[h]
    \centering
    \includegraphics[width=0.48\textwidth]{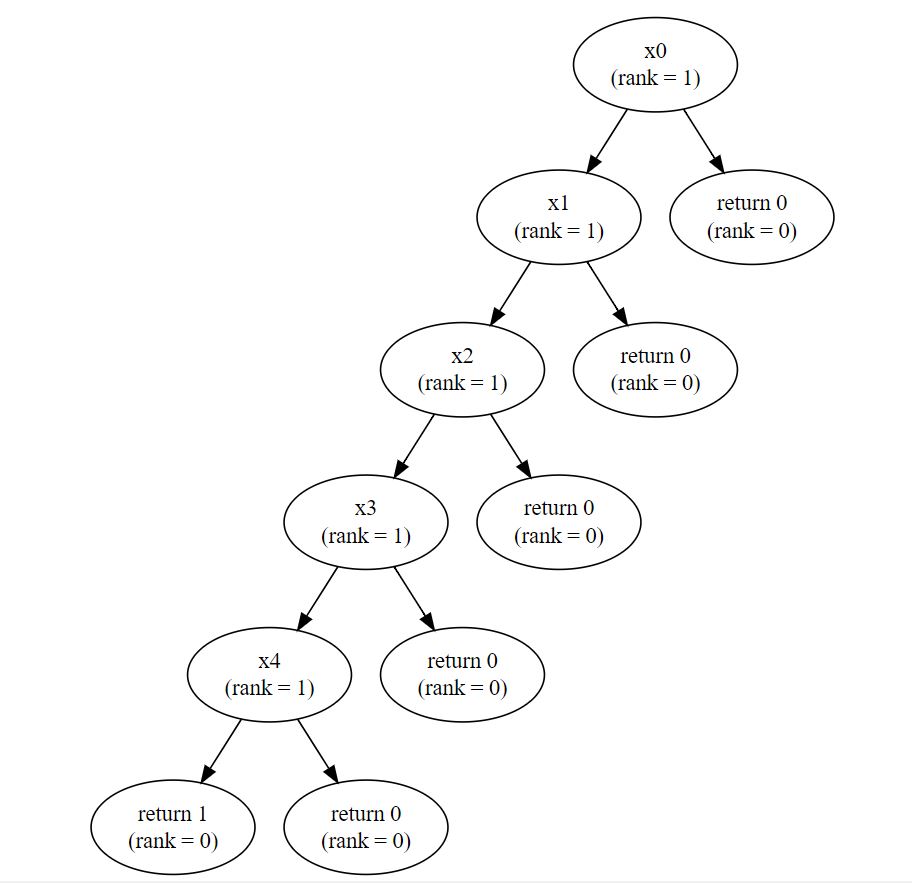}
    \caption{An example of a decision tree with rank 1, which is equivalent to the conjunction $x_0 \wedge x_1 \wedge x_2 \wedge x_3 \wedge x_4$. }
    \label{fig:rank_one_DT}
\end{figure}

\section{Tabular Dataset Description}
\label{sec:Appendix_datasets_description}
We use datasets (See Table 4) that differ in several aspects such as in the number of features (from 16 up to 200), the number of classes (from 2 up to 9), and the number of samples (from 10k up to 200k). To keep things simple, we selected datasets with no missing values, and that do not require preprocessing. All models were trained on the raw data without any feature or data engineering and without any kind of data balancing or weighting. Only feature standardization was applied.

\begin{table}[h!]
\begin{center}
\scalebox{0.7}{
 \begin{tabular}{||l c c c c c||} 
 \hline
 Dataset & features & classes & samples & source & link\\ [0.5ex] 
 \hline\hline

  Otto Group  & 93 & 9 & 61.9k & Kaggle & kaggle.com/c/otto-group-product-classification-challenge/overview \\ 
 \hline
  Gesture Phase  & 32 & 5 & 9.8k & OpenML & openml.org/d/4538 \\ 
 \hline
  Gas Concentrations & 129 & 6 & 13.9k & OpenML & openml.org/d/1477 \\ 
 \hline
  Eye Movements & 26 & 3 & 10.9k & OpenML & openml.org/d/1044 \\ 
 \hline
   Santander Transaction & 200 & 2 & 200k & Kaggle & kaggle.com/c/santander-customer-transaction-prediction/overview \\ 
 \hline
  House  & 16 & 2 & 22.7k & OpenML & openml.org/d/821 \\ 
 \hline

\end{tabular}}
\end{center}
\caption{A description of the tabular datasets}
\end{table}

\section{Experimental Protocol}
\label{appendix:experiments_details}

\subsection{Data Partition and Grid Search Procedure}
\label{appendix:data partition and grid search process appendix}
All experiments in our work, using both synthetic and real datasets, were done through a grid search process. Each dataset was first randomly divided into five folds in a way that preserved the original distribution. Then, based on these five folds, we created five partitions of the dataset as follows. Each fold is used as the test set in one of the partitions, while the other folds are used as the training and validation sets. This way, each partition was $20\%$ test, $10\%$ validation, and $70\%$ training. This division was done once \footnote{We used seed number 1.}, and the same partitions were used for all models.
Based on these partitions, the following grid search process was repeated three times with three different seeds\footnote{We used seed numbers 1, 2, 3.} (with the exact same five partitions as described before).

\begin{algorithm}[H]
\SetAlgoLined
\textbf{Input:} model, configurations\_list \\
results\_list = [ ] \\
\For{i=1 to n\_partitions}{
    val\_scores\_list = [ ] \\
    test\_scores\_list = [ ] \\
    train, val, test = read\_data(partition\_index=i) \\
     \For{c in configurations\_list}{
        trained\_model = model.train(train\_data=train, val\_data=val, configuration=c) \\
        trained\_model.load\_weights\_from\_best\_epoch() \\
        val\_score = trained\_model.predict(data=val) \\
        test\_score = trained\_model.predict(data=test) \\
        val\_scores\_list.append(val\_score) \\
        test\_scores\_list.append(test\_score) \\
     }
     best\_val\_index = get\_index\_of\_best\_val\_score(val\_scores\_list) \\
     test\_res = test\_scores\_list[best\_val\_index] \\
     results\_list.append(test\_res) \\
 }     
 mean = mean(results\_list) \\
 sem = standard\_error\_of\_the\_mean(results\_list)\\
 \textbf{Return:} mean, sem
 \caption{Grid Search Procedure}
 \label{alg:grid_search}
\end{algorithm}
The final mean and sem\footnote{For details, see: docs.scipy.org/doc/scipy/reference/generated/scipy.stats.sem.html} that we presents in all experiments are the average across the three seeds. 
Additionally, as can be seen from Algorithm \ref{alg:grid_search}, the model that was trained on the training set ($70\%$) is the one that is used to evaluate performance on the test set ($20\%$). This was done to keep things simple. The loading wights command is relevant for the neural network models. While for the XGBoost, the framework handles the optimal number of estimators on prediction time (accordingly to early stopping on training time). 

\subsection{Training Protocol}
\label{appendix:Training details}
The \DNFNET~and the FCN were implemented using Tesnorflow \cite{abadi2016tensorflow}. To make a fair comparison, for both models, we used the same batch size\footnote{For \DNFNET~, when using 3072 formulas, we set the batch size to 1024 on the Santander Transaction and Gas datasets and when using 2048 formulas, we set the batch size to 1024 on the Santander Transaction dataset. This was done due to memory issues.} of 2048, and the same learning rate scheduler (reduce on plateau) that monitors the training loss. We set a maximum of 1000 epochs and used the same early stopping protocol (30 epochs) that monitors the validation score. Moreover, for both of them, we used the same loss function (softmax-cross-entropy for multi-class datasets and sigmoid-cross-entropy for binary datasets) and the same optimizer (Adam with default parameters).

For \DNFNET~we used an initial learning rate of $0.05$. For FCN, we added the initial learning rate to the grid search with values of $\{0.05, 0.005, 0.0005\}$.

For XGBoost \cite{chen2016xgboost}, we set the maximal number of estimators to be 2500, and used an early stopping of 50 estimators that monitors the validation score.

All models were trained on GPUs - Titan Xp 12GB RAM.

Additionally, in the case of \DNFNET,  we took a symmetry-breaking approach between the different DNNFs. This is reflected by the \DNNF~group being divided equally into four subgroups where, for each subgroup, the number of conjunctions is equal to one of the following values $[6, 9, 12, 15]$, and the group of conjunctions of each \DNNF~was divided equally into three subgroups where, for each subgroup, the conjunction length is equal to one of the following values $[2, 4, 6]$.
The same approach was used for the parameter $p$ of the random mask. The \DNNF~group was divided equally into five subgroups where, for each subgroup, $p$ is equal to one of the following values $[0.1, 0.3, 0.5, 0.7, 0.9]$. In all experiments we used the same values.

\subsection{Grid Parameters -- Tabular Datasets}
\label{appendix:grid_params}
\subsubsection{DNF-Net}
    \begin{table}[h!]
    \begin{center}
    \scalebox{0.8}{
      \begin{tabular}{SS}
        \toprule
          \multicolumn{2}{c}{\textbf{DNF-Net (42 configs)}} \\
          \midrule
        \textbf{hyperparameter} & \textbf{values}     \\
        \hline
        \text{n. formulas} &                \text{$\{64, 128, 256, 512, 1024, 2048, 3072\}$} \\
        \text{feature selection beta} &     \text{$\{1.6 ,1.3, 1., 0.7, 0.4, 0.1\}$} \\
        \bottomrule
      \end{tabular}}
    \end{center}
    \end{table}
    \FloatBarrier
    
\subsubsection{XGBoost}
\begin{table}[h!]
\begin{center}
\scalebox{0.8}{
  \begin{tabular}{SS}
    \toprule
      \multicolumn{2}{c}{\textbf{XGBoost (864 configs)}} \\
      \midrule
    \textbf{hyperparameter} & \textbf{values}     \\
    \hline
    \text{n. estimators} &      \text{$\{2500\}$}  \\
    \text{learning rate} &      \text{$\{0.001, 0.005, 0.01, 0.05, 0.1, 0.5\}$}  \\
    \text{max depth} &          \text{$\{2, 3, 4, 5, 7, 9, 11, 13, 15\}$}  \\
    \text{colsample by tree} &  \text{$\{0.25, 0.5, 0.75, 1.\}$}  \\
    \text{sub sample} &         \text{$\{0.25, 0.5, 0.75, 1.\}$}  \\
    \bottomrule
  \end{tabular}}
\end{center}
\end{table}
\FloatBarrier

\subsubsection{Fully Connected Networks}
The FCN networks are constructed using Dense-RELU-Dropout blocks with $L_2$ regularization. The network's blocks are defined in the following way. Given depth and width parameters, we examine two different configurations: (1) the same width is used for the entire network (e.g., if the width is 512 and the depth is four, then the network blocks are [512, 512, 512, 512]), and (2) the width parameter defines the width of the first block, and the subsequent blocks are reduced by a factor of 2 (e.g., if the width is 512 and the depth is four, then the network blocks are [512, 256, 128, 64]). On top of the last block we add a simple linear layer that reduce the dimension into the output dimension. The dropout and $L_2$ values are the same for all blocks. 

\begin{table}[h!]
\begin{center}
\scalebox{0.8}{
  \begin{tabular}{SS}
    \toprule
      \multicolumn{2}{c}{\textbf{FCN (3300 configs)}} \\
      \midrule
    \textbf{hyperparameter} & \textbf{values}     \\
    \hline
    \text{depth} &                        \text{$\{1, 2, 3, 4, 5, 6\}$}  \\
    \text{width} &                        \text{$\{128, 256, 512, 1024, 2048\}$}  \\
    \text{$L_2$ lambda} &                   \text{$\{10^{-2}, 10^{-4}, 10^{-6}, 10^{-8}, 0.\}$}  \\
    \text{dropout} &                      \text{$\{0., 0.25, 0.5, 0.75\}$}  \\
    \text{initial learning rate} &        \text{$\{0.05, 0.005, 0.0005\}$}  \\
    \bottomrule
  \end{tabular}}
\end{center}
\end{table}
\FloatBarrier

\subsection{Ablation Study}
\label{appendix:Ablation Studies - technical details}
All ablation studies experiments were conducted using the grid search process as described in \ref{appendix:data partition and grid search process appendix}. In all experiments, we used the same training details as described on \ref{appendix:Training details} for \DNFNET. Where the only difference between the different experiments is the addition or removal of the components.

 The single hyperparameter that was fine-tuned using the grid search is the `feature selection beta' on the range $\{ 1.6, 1.3, 1., 0.7, 0.4, 0.1\}$, in experiments in which the feature selection component is involved. In the other cases, only one configuration was tested in the grid search process for a specific number of formulas.

\subsection{Feature Selection Analysis}
\label{appendix:Feature selection analysis}
The input features $\bx \in \reals^d$ of all six datasets were generated from a $d$-dimensional Gaussian distribution with no correlation across the features, $\bx \sim \mathbb{N}(0,I)$. The label $\by$ is sampled as a Bernoulli random variable with $\mathbb{P}(\by = 1 | \bx) = \frac{1}{1+logit(\bx)}$, where $logit(\bx)$ is varied to create the different synthetic datasets ($\bx_i$ refers to the $i$th entry):
\begin{enumerate}
    \item \textbf{Syn1}: $logit(\bx) = exp(\bx_1\bx_2)$
    \item \textbf{Syn2}: $logit(\bx) = exp(\sum_{i=3}^6 \bx_i^2-4)$
    \item \textbf{Syn3}: $logit(\bx) = -10\sin(2\bx_7)+2|\bx_8|+\bx_9+exp(-\bx_{10}) - 2.4$
    \item \textbf{Syn4}: if $\bx_{11} < 0$, logit follows \textbf{Syn1}, else, logit follows \textbf{Syn2}
    \item \textbf{Syn5}: if $\bx_{11} < 0$, logit follows \textbf{Syn1}, else, logit follows \textbf{Syn3}
    \item \textbf{Syn6}: if $\bx_{11} < 0$, logit follows \textbf{Syn2}, else, logit follows \textbf{Syn3}
\end{enumerate}
We compare the performance of a basic FCN on three different cases: (1) {\bf oracle (ideal) feature selection} -- where the input feature vector is multiplied element-wise with an input oracle mask, whose $i$th entry equals 1 iff the $i$th feature is relevant (e.g., on Syn1, features 1 and 2 are relevant, and on Syn4, features 1-6, and 11 are relevant), (2) {\bf our (learned) feature selection mask} -- where the input feature vector is multiplied element-wise with the mask $\bm_t$, i.e.,  the entries of the mask $\bm_s$ (see Section~\ref{sec:feature_selection}) 
are all fixed to 1, and (3) {\bf no feature selection}.

From each dataset, we generated seven different instances that differ in their input size, \\ $d \in [11, 50, 100, 150, 200, 250, 300]$. Where when the input dimension $d$ increases, the same logit function is used. Each instance contains 10k samples that were partitioned as described in Section \ref{appendix:data partition and grid search process appendix}. We treated each instance as an independent dataset, and the grid search process that is described in Section \ref{appendix:data partition and grid search process appendix} was done for each one.

The FCN that we used has two dense hidden layers [64, 32] with a RELU activation. To keep things simple, we have not used drouput or any kind of regularization. 
The same training protocol was used for all three models. We used the same learning rate scheduler, early stopping protocol, loss function and optimizer as appear in Section \ref{appendix:Training details}\footnote{We noticed that in this scenario, a large learning rate or large batch size leads to a decline in the performance of the 'FCN with the feature selection'. While the simple FCN and the 'FCN with oracle mask' remains approximately the same.}. We use a batch size of 256, and an initial learning rate of 0.001. The only hyperparameter that was fine-tuned is the `feature selection beta' in the case of `FCN with feature selection' on the range $\{1.3, 1., 0.7, 0.4\}$. For the two other models, only a single configuration was tested in the grid search process.

\end{document}